\definecolor{MethodColor}{gray}{0.4}
\definecolor{CellBck}{gray}{0.92}
\newcommand{\cbg}{\cellcolor{CellBck}}
\acrodef{FA}[FA]{Frame Averaging}
\acrodef{MC}[MC]{Monte Carlo}
\acrodef{kNN}[\emph{k}NN]{\emph{k} nearest neighbors}
\acrodef{MLP}[MLP]{Multi-Layer Perceptron}
\acrodef{PCA}[PCA]{Principal Component Analysis}
\acrodef{LRF}[LRF]{Local Reference Frame}
\acrodef{FPS}[FPS]{frames per second}
\newcommand{\method}[1]{{#1}}
\newtheorem{theorem}{Theorem}
\definecolor{cvprblue}{rgb}{0.21,0.49,0.74}
\title{Efficient Continuous Group Convolutions\\for Local SE(3) Equivariance in 3D Point Clouds}
\author{Lisa Weijler\\
TU Wien, Austria
\and
Pedro Hermosilla\\
TU Wien, Austria}
\begin{document}
\maketitle
\begin{abstract}
  Extending the translation equivariance property of convolutional neural networks to larger symmetry groups has been shown to reduce sample complexity and enable more discriminative feature learning. Further, exploiting additional symmetries facilitates greater weight sharing than standard convolutions, leading to an enhanced network expressivity without an increase in parameter count. However, extending the equivariant properties of a convolution layer comes at a computational cost. In particular, for 3D data, expanding equivariance to the SE(3) group (rotation and translation) results in a 6D convolution operation, which is not tractable for larger data samples such as 3D scene scans. While efforts have been made to develop efficient SE(3) equivariant networks, existing approaches rely on discretization or only introduce global rotation equivariance. This limits their applicability to point clouds representing a scene composed of multiple objects. This work presents an efficient, continuous, and local SE(3) equivariant convolution layer for point cloud processing based on general group convolution and local reference frames. Our experiments show that our approach achieves competitive or superior performance across a range of datasets and tasks, including object classification and semantic segmentation, with negligible computational overhead. The code for our implementation is available at \href{https://github.com/lisaweijler/SE3Conv3D}{this repository}.

\end{abstract}    
\section{Introduction}
\label{sec:intro}

\begin{figure}[t]
  \centering
    \includegraphics[width=\linewidth]{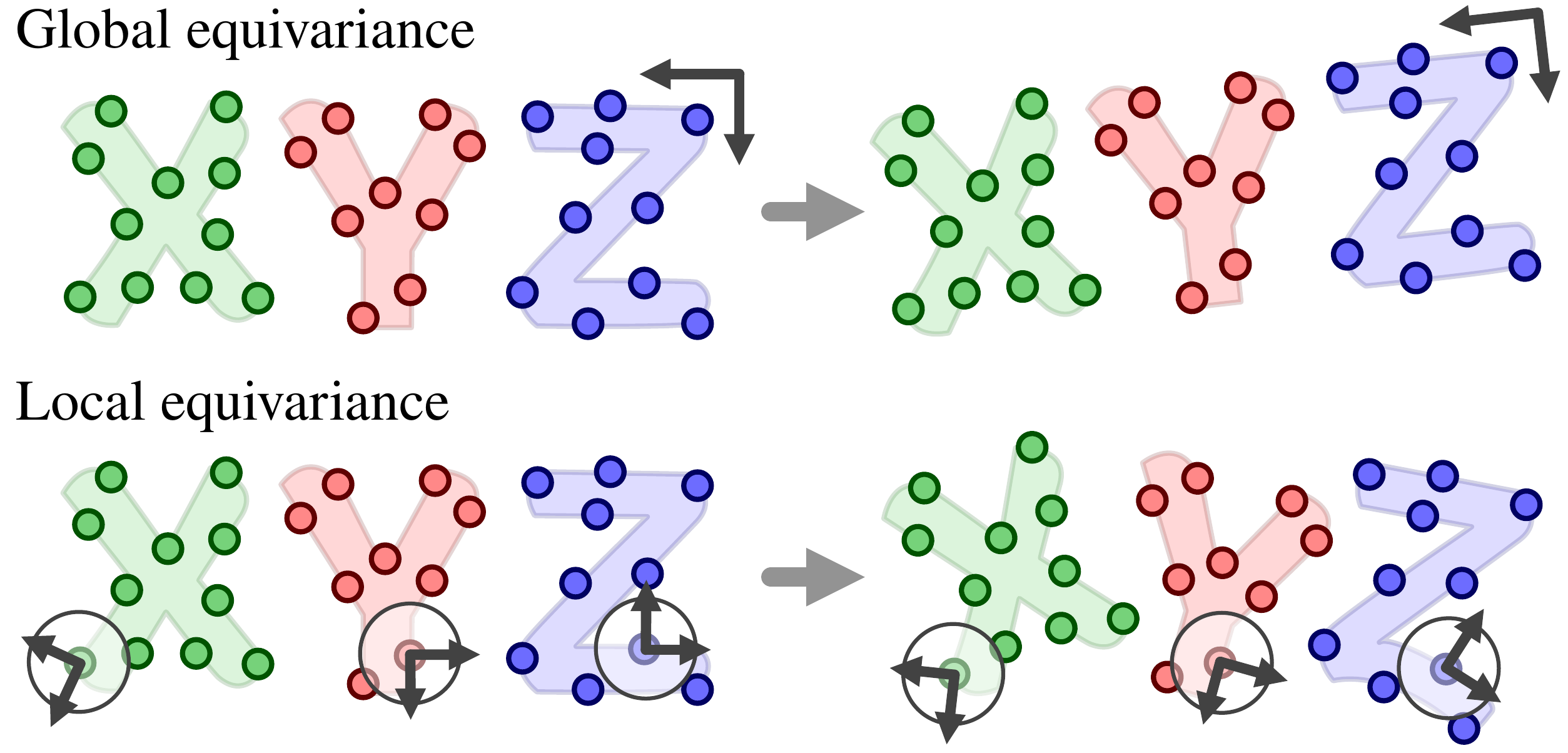}

  \captionof{figure}{
  While \textbf{global equivariant} designs ensure robustness to whole-scene rotations, they fail with randomly rotated scene parts or elements. In contrast, \textbf{local equivariant} operations maintain robustness by handling local geometry rotations around each point.
  }
  \label{fig:teaser}
\end{figure}

In 3D vision, point clouds are the most commonly used representation to process 3D data given that they are relatively cheap to capture and process. This representation is composed of multiple point coordinates, with additional attributes such as color or normal vector, from samples on the surface of 3D objects.
In the past years, several neural network architectures have been proposed to process such data~\cite{qi2017pointnet,atzmon2018pccnn,thomas2019KPConv,boulch2020convpoint,hermosilla2018mccnn,wu2019pointconv}.
Approaches learning directly from 3D data often take inspiration from the success in 2D vision and address two of the main challenges in such data representation, order invariance and translation equivariance. Yet, 3D data entails more variations and complex group transformations due to an increased number of degrees of freedom (DoF); for the roto-translation group SE(3), $\text{DoF}=6$. Objects in 3D space do not have a predefined canonical orientation and many rotational variances are present. 

Equivariance is the property of an operator that allows the prediction of the transformation of the output given an input transformation, while group-invariant operators produce identical features under various group transforms of the input. The latter can be seen as an information loss; they struggle to differentiate between unique instances with internal symmetries, e.g., "$8$" vs. "$\infty$". Baking SE(3)-equivariance into the network architecture can thus be beneficial since equivariant features maintain information about the input group transform across neural layers, making them more expressive and generalizable by capturing the variance that is present in the data. 

Traditionally, to obtain such properties, data augmentation techniques are used, but this requires neural networks to store latent orientations of the objects, limiting the network capacity.
Whilst this might be a viable solution for network architectures for 2D images, neural networks for 3D point clouds usually require large amounts of memory, limiting the number of parameters of the models and making it impossible to achieve such equivariance from the data.
Recent advances have been made to address this problem~\cite{deng2021vector, puny2022frame}, where several neural network architectures can match or even surpass the performance of the standard architectures relying on data augmentations.

Unfortunately, many of those solutions only address the problem of global rotation equivariance, \ie rotations of a single 3D object or scene as a whole.
3D objects or scenes are composed of multiple parts or objects that can have arbitrary orientations \wrt each other, see Fig.~\ref{fig:teaser}.
The relative orientations of different objects in the scene cannot be captured by global equivariance as obtained by existing architectures or by data augmentation techniques.

Group convolution is an operation that is, per definition, equivariant to a specific group and, hence, capable of coping with such problems.
These operations aggregate information from neighboring samples, not only from the translation group T(3) as standard convolutions, but from the rotation and translation group SE(3).
By restricting the receptive field of these operations, they become rotation equivariant \wrt the local geometry inside the receptive field, allowing them to be equivariant to the relative rotations of different parts of the scene.
To successfully compute such operations in the continuous domain a complex integral over the full group needs to be solved (6D convolution), which makes such operations not practical for large networks since it has large memory and computational burden. Further, defining a grid on SE(3) is not trivial, where recent works try to address these problems by using \ac{MC} integration~\cite{finzi2020generalizing} or by discretizing the SO(3) group~\cite{chen2021equivariant, zhu2023e2pn}.
However, as we will show, these approximations limit the performance of the network.

In this paper, we propose using a finite subset $\mathcal{F}(x) \subset$ SE(3), referred to as a frame, to solve the group equivariant integral, which allows for exact equivariance (as opposed to approaches based on \ac{MC} sampling or discretization), while reducing the computational burden. 
The elements $g \in \mathcal{F}(x)$ can be seen as \ac{LRF}, that together with their corresponding point $x$ build a grid on SE(3), where the integral of the group convolution can be computed efficiently.
Further, our approach stochastically samples $g \in \mathcal{F}(x)$ during training with only a few samples, two or even one, which reduces the additional computational and memory burden significantly and for the case of one sample to almost zero.
Our extensive experiments show that such group convolution is able to achieve local rotation equivariance, surpassing other local equivariant designs by a large margin.
Moreover, our experiments also show that a network constructed using such convolutions as building blocks is able to be robust to local transformations not seen during training, where popular global equivariant frameworks fail.

\section{Related work}
\label{sec:related_work}
This section gives an overview of networks that can process unstructured data, such as point clouds, with a focus on specific point architectures that are equivariant to rotations.

\textbf{Point-based neural networks.}
The first neural network architecture specifically designed to process point clouds was PointNet~\cite{qi2017pointnet}. The idea of directly processing point clouds was followed by several works incorporating concepts and designs from convolutional neural networks for images into the continuous domain.
Atzmon~\etal~\cite{atzmon2018pccnn}, Thomas~\etal~\cite{thomas2019KPConv}, and Boulch~\etal~\cite{boulch2020convpoint} propose a convolution operation based on a set of points located inside a receptive field and a correlation function as the kernel function.
Another line of research, including Hermosilla~\etal~\cite{hermosilla2018mccnn} and Wu~\etal~\cite{wu2019pointconv} uses a convolution operation with a kernel function represented by an \ac{MLP}, that takes the relative position between points as input. 
In this work, similarly, we use an \ac{MLP} as our kernel, but with the relative orientation in addition to the relative position between points as input.

\textbf{Rotation equivariant point networks.}
Equivariance or invariance to SE(3) can be achieved by modifying the model's input through data augmentation or by adapting the internal operations of the model to have such equivariance by construction. 
Several works have designed equivariant network architectures by aligning the input point cloud to a reference frame before being processed by the model.
Gojcic~\etal~\cite{gojcic20193DSmoothNet} align local patches of a point cloud to their \ac{LRF} defined by the normal to solve the task of keypoint matching.
Xiao~\etal~\cite{xiao2020pca} uses \ac{PCA} to build a frame to transform the input point cloud and an attention network to aggregate features over the different transformations.
Other works, instead, adopt a different approach, in which they compute invariant local features and use these as input to a convolution operation.
Zhang~\etal~\cite{zhang-riconv-3dv19} use angles and distances as input to a local PointNet architecture to achieve rotation invariance.
Later, Zhang~\etal~\cite{zhang2022riconv2} extended this work with additional local features.
Yu~\etal~\cite{yupositionalfeature2020} also used distances and angles to achieve global equivariance.
However, none of these approaches achieved the goal of our work, local SE(3) equivariance.
Recently, Puny~\etal~\cite{puny2022frame} suggested a general framework to achieve equivariance on any neural network by averaging the model's output over a subset of the group elements. Concurrently with our work, Atzmon~\etal~\cite{atzmonapproximately} introduced a piecewise E(3) equivariant approach applying ~\cite{puny2022frame} to multiple parts proposed by a partition prediction model, where the locality is determined by the partition of the objects.
In this work we use the concepts of~\cite{puny2022frame} to achieve local SE(3) equivariance efficiently without the need for a partition prediction.

Another line of work achieves local equivariance by making the model's internal features \emph{steerable}~\cite{cohen2016steerable}, \,  i.e., the feature values transform predictably as the input transforms.
In 3D, these works usually rely on the theory of spherical harmonics to obtain \emph{steerable} features~\cite{thomas2018tensor, fuchs2020se3transformers, weiler20183dsteercnn}. 
Vector Neurons~\cite{deng2021vector} also uses higher-order features, representing each feature as a 3D vector, to achieve global SO(3) equivariance.
Unfortunately, this increases the memory consumption of the models and restricts the kernel representation used.

More related to our work is the concept of group convolutions~\cite{cohen2016group}.
These operations generalize the concept of convolutions and extend the equivariance to the translation group of standard convolution to any group.
These ideas have been applied to the SE(3) group for voxelized representations, where the group has a finite number of elements~\cite{worrall2018cubenet}, and to point clouds in the continuous domain by discretizing the continuous SE(3) group using the icosahedral group~\cite{chen2021equivariant, zhu2023e2pn}.
Although the computation of such group convolution can be implemented with permutation matrices, a large number of group elements requires a significant computational burden.
To address this issue, Chen~\etal~\cite{chen2021equivariant} proposed a separable convolution allowing for fast computation of the group convolution.
Zhu~\etal~\cite{zhu2023e2pn} instead proposes to use the SO(2) group as the stabilizer subgroup to form spherical quotient feature fields.
Unfortunately, these discretizations require lifting the feature representation to the size of the discrete group used, increasing the memory requirement of the model by a factor equal to the size of the discrete group.
Recent works have suggested solving the group convolution integral by using \ac{MC} sampling on the continuous group~\cite{finzi2020generalizing, Hutchinson2021lietransformer}.
These works randomly sample the group and use farthest point sampling to select a subset from which the integral is approximated with \ac{MC} integration.
However, this approach might require large samples to obtain a reasonable estimation of the integral and hence suffer from substantial memory load.

In this work, we also suggest using group convolutions on the continuous domain to achieve local rotation equivariance.
However, our sampling strategy allows us to solve this integral only with a few samples on the SO(3) group, rendering group convolutions a viable solution to achieve equivariance on standard deep point-based architectures with negligible computational or memory requirements.
\section{Methods}
\label{sec:methods}

In this section, we describe our proposed approach.
First, the reader is introduced to the concept of group equivariant convolutions. Then, our efficient continuous group convolution is described in detail.

\subsection{Group equivariant convolution}

An intuitive way of thinking about convolutions is the notion of template matching, where a kernel $k$ is 
shifted over a feature map $f$ to detect patterns. In the continuous case, we consider a feature map $f: X \rightarrow \mathbb{R}^c$ as a multi-channel scalar field and $\mathcal{X} = (\text{L}^2(X))^c$ as the space of feature maps over some space $X$. A more formal definition of a convolution layer is then given as a learnable kernel operator $\Phi: \mathcal{X}\rightarrow \mathcal{Y}$ that transforms feature maps $f$ as follows
\begin{equation}
[\Phi f](y) = (f \star k)(y) = \int_{X} f(x) k(x-y) d x,
\end{equation}
with $X=Y=\mathbb{R}^d$, where $d=3$ for point clouds. (Note that the definition given is cross-correlation instead of convolution since this aligns better with template-matching.) It is well known that convolution layers are translation equivariant due to the shifted kernel, i.e., the kernel is only dependent on relative distances: if the input feature map is shifted, the output feature map follows the same transformation. Yet, since relative distances hold directional information that changes under rotations, it is self-evident that a convolution layer is not equivariant to rotations. One solution is to use $\| x- y \|$ as input to the kernel at the cost of losing the capacity to capture directional features.

We say that an operator $\Phi$ is equivariant to a specific Group $G$ if it commutes with group representations on the input and output feature maps, meaning $\forall g \in  G: \rho^\mathcal{Y}(g)\circ\Phi = \Phi \circ \rho^\mathcal{X}(g)$, where $\rho^\mathcal{X}(g)$ is the regular group representation of $g$ that transforms a function $f\in \mathcal{X}$ by shifting its domain via $g^{-1}$. If the output feature map is left unaltered, $\Phi$ is $G$-invariant. 
Various important works in the field of equivariant deep learning~\cite{cohen2019general,bekkers2020bspline,kondor2018generalization} show or conclude that a linear operator $\Phi$ that maps between feature maps on homogeneous spaces $X$, $Y$ of a group $G$, is $G$-equivariant iff it is a kernel operator (also often called integral operator) with a single-valued kernel (only dependent on relative values). Further, considering $Y=G/H$ as quotient space with $H=\{g\in G|g y_0 = y_0\}$ as the stabilizer subgroup $\text{Stab}_G(y_0)$, which consists of group elements that leave a chosen origin $y_0 \in Y$ unchanged, the kernel of a $G$-equivariant $\Phi$ must be invariant towards elements of $H$ (invariance constraint). 

When looking at the concrete example $Y=\mathbb{R}^d$, $G=\text{SE}(d)$, we say $\mathbb{R}^d \equiv \text{SE}(d)/\text{SO}(d)$ is a quotient space with stabilizer subgroup SO$(d)$; an intuition is given in the following.
Since $\mathbb{R}^d$ is a homogeneous space of SE$(d)$, every point $x\in \mathbb{R}^d$ can be reached from the origin $\mathbf{0}\in \mathbb{R}^d$ by a group element, a roto-translation, $(\text{t, R}) \in \text{SE}(d)$. 
In fact there exist several group elements such that $x = (\text{t, R})\mathbf{0} = \text{R}\mathbf{0}+\text{t}$, namely any group element with $\text{t}=x$ regardless of the rotation part as any rotation $\text{R}\in \text{SO}(d)$ leaves the origin unchanged. 
Hence if $Y=\mathbb{R}^d$ and $G=\text{SE}(d)$, the kernel of an $\text{SE}(d)$-equivariant $\Phi$ must be $\text{SO}(d)$-invariant, meaning one could only use isotropic kernels, which severely limits the expressivity of patterns that can be detected e.g. using  $\| x- y \|$ as discussed above. In order to not limit the representation power of the kernel while achieving SE$(d)$ equivariance, the feature maps need to be lifted to 

the group itself $Y=G$ since then the stabilizer subgroup only consists of the trivial element $ H =\{\mathbf{e}\}$, and the kernel is no longer constrained. Note that $Y$ and $X$ do not necessarily have to be the same space.
Consequently, to extend the translation equivariance of convolution layers to arbitrary affine Lie groups 

three types of layers can be used~\cite{bekkers2020bspline}:
\begin{itemize}
    \item \textbf{Lifting layer} ($X=\mathbb{R}^d$, $Y=G$, $H=\{\mathbf{e}\}$): 
    \begin{equation}
    (f \star k)(g) = \int_{R^d} f(x) k(g^{-1}x) dx
    \end{equation}
    For $G=$SE$(d)$, this can be viewed as a template matching various rotated versions of the kernel, creating a feature map for different positions and rotations.
    \item \textbf{Group convolution layer} ($X=G, Y=G, H=\{\mathbf{e}\})$:
    \begin{equation}
    (f \star k)(g) = \int_{G} f(g') k(g^{-1}g') d\mu(g')
    \label{eq:Gconv}
    \end{equation}
    This layer constitutes the convolution on the full group, e.g., it conducts template matching over all possible combinations of positions and rotations from the input and output feature map.
    \item \textbf{Projection layer} ($X=G$, $Y=\mathbf{R}^d$, $H=\text{Stab}_G(\mathbf{0})$):
    \begin{equation}
    (f \star k)(x) = \int_{H} f(x,h')  d\mu(h')
    \label{eq:proj-layer}
    \end{equation}

    For tasks like point-wise classification, the final prediction must be invariant, so feature maps or rotations are projected to their corresponding point in $\mathbb{R}^d$. This layer is omitted for tasks like pose estimation.
\end{itemize}

\subsection{Efficient group convolution}
Since group convolution layers map between higher dimensional feature maps and must compute the integral over the entire group, they can introduce a computational bottleneck. In the case of 3D point clouds and the affine group $\text{SE(3)} = \mathbb{R}^3 \rtimes \text{SO(3)}$, ~\cref{eq:Gconv} turns into a 6D convolution $(f \star k)(g)$ with $g=(\text{x, R}) \in \text{SE(3)}$, which can be written as a double integral
\begin{equation}
\int_{\mathbb{R}^3} \int_{\text{SO(3)}} f(\text{t, R'})k(\text{R}^{-1}(\text{t} - \text{x}), \text{R}^{-1}\text{R'}) d\text{t} d\mu(\text{R'}),
\label{eq:double-int}
\end{equation}
with $\mu(\cdot)$ being the Haar measure on SO$(3)$.

In addition to the computational burden of a 6D convolution, another difficulty lies in how to define a grid on SE$(3)$ or, more specifically, on the SO$(3)$ part to compute the integral of ~\cref{eq:double-int}. 
Previous works such as \cite{zhu2023e2pn, chen2021equivariant} have relied on the discretization of SO(3) using platonic solids that assign to each spatial component the same finite grid on SO(3) to make it tractable, yet at the loss of continuity and exact equivariance.
To stay in the continuous domain, similarly to the work of Finzi~\etal~\cite{finzi2020generalizing}, one can use \ac{MC} approximation for both the spatial and rotational part to solve the double integral
\begin{equation}
    \sum_{j} \frac{1}{\lvert H'_j \rvert}\sum_{(\text{t, R'})\in H'_j}  f(\text{t, R'})k(\text{R}^{-1}(\text{t} - \text{x}), \text{R}^{-1}\text{R'}),
    \label{eq:lie-conv}
\end{equation}
\noindent where $j$ are the indices of the points $x_j \in \mathbb{R}^3$ of the point cloud
and $H'_j = \{(\text{t, R'}) | \text{t}=x_j, \text{R'}\in \text{SO(3)}\}$ is the set of SE(3) group elements that result form lifting points $x_j$ to SE(3) by repeating the point coordinate with uniformly sampled rotations. Note that the point cloud is treated as a sparse feature map that defines the sampling of the spatial component.

 Using \ac{MC} approximation can be thought of as defining a random grid on SE(3). Hence, the approximation quality of this integral depends on the number of sampled group elements or, more precisely, on the number of rotations $\lvert H'_j \rvert = O$ sampled per point $x_j$; the approximation error converges towards zero for $O \rightarrow \infty$.
However, sampling $O$ rotations per point increases the model's memory by a factor of $O$.
Moreover, the required computations for the convolution also increase by a factor of $O^2$. Hence, using \ac{MC} results in a trade-off between computational efficiency and preciseness of equivariance property, showing that an efficient grid on SE(3) that allows for exact equivariance with finite rotation elements is crucial to make continuous group convolutions practical for point-based networks.

\textbf{Efficient grid on SE(3).}
To achieve exact equivariance with tractable computational load, we propose a carefully constructed grid $\mathcal{F}(x_j) \subset SE(3)$ specific to each point $x_j \in \mathbb{R}^3$.
Note that while $H_j$ in \cref{eq:lie-conv} was also dependent on $x_j$, the grid was still the same for each point, namely the entire group, where the dependency merely came from approximation by sampling.

We call $\mathcal{F}(x):\mathbb{R}^3 \rightarrow 2^{\text{SE}(3)}$ a frame, which is a set-valued function and maps a point in space to a set of group elements such that $\forall (\text{t, R}) \in \mathcal{F}(x): x = \text{t}$. A frame is called $G$-equivariant if $\forall g \in G: g\mathcal{F}(x) = \mathcal{F}(gx)$. Using $\mathcal{F}(x)$ as grid, we define a 3D sparse point cloud group convolution layer $\Phi_{\mathcal{F}}$ as

\begin{equation}
    \sum_{j} \frac{1}{\lvert \mathcal{F}(x_j) \rvert}\sum_{(\text{t, R'})\in \mathcal{F}(x_j)}  f(\text{t, R'})k(\text{R}^{-1}(\text{t} - \text{x}), \text{R}^{-1}\text{R'}).
    \label{eq:loco-roto-conv}
\end{equation}
$\Phi_{\mathcal{F}}$ thus transforms feature maps $f: X \rightarrow \mathbb{R}^c$, defined on the domain $X = \{\mathcal{F}(x)|x\in \mathbb{R}^3\}$.
Using those definitions, we can formulate the following.
\begin{theorem}
    Let $\mathcal{F}$ be an SE(3)-equivariant frame. Then, $\Phi_{\mathcal{F}}$ is SE(3)-equivariant. 
    \label{theorem}
\end{theorem}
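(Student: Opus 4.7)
The plan is to verify the equivariance condition $\rho^{\mathcal{Y}}(h)\circ\Phi_{\mathcal{F}}=\Phi_{\mathcal{F}}\circ\rho^{\mathcal{X}}(h)$ by showing the two sides agree when evaluated at an arbitrary test point $g=(x,R)\in\text{SE}(3)$, for an arbitrary transformation $h=(a,A)\in\text{SE}(3)$. Because $\Phi_{\mathcal{F}}$ is defined as an explicit finite double sum, the theorem reduces to a term-by-term identity after a suitable re-indexing.

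First I would expand the left-hand side $[\Phi_{\mathcal{F}}(\rho^{\mathcal{X}}(h)f)](g)$. The transformed feature map $\rho^{\mathcal{X}}(h)f$ has support $\bigcup_j h\cdot\mathcal{F}(x_j)$, which by the assumed frame equivariance equals $\bigcup_j \mathcal{F}(hx_j)$. Hence the outer sum runs over the transported point cloud $\{hx_j\}$, with the inner sum over $\mathcal{F}(hx_j)$. Because $h$ restricts to a bijection $\mathcal{F}(x_j)\to\mathcal{F}(hx_j)$, the normalization $|\mathcal{F}(hx_j)|=|\mathcal{F}(x_j)|$ is unchanged, and the change of variables $(t,R')=h\cdot(s,S)=(As+a,\,AS)$ with $(s,S)\in\mathcal{F}(x_j)$ replaces $(\rho^{\mathcal{X}}(h)f)(t,R')=f(h^{-1}(t,R'))$ by $f(s,S)$.

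Next I would expand the right-hand side $[\rho^{\mathcal{Y}}(h)\Phi_{\mathcal{F}}f](g)=[\Phi_{\mathcal{F}}f](h^{-1}g)$ directly from the definition, using $h^{-1}g=(A^{-1}(x-a),\,A^{-1}R)$. This outer sum is already indexed by the original points $\{x_j\}$ and the inner sum by $\mathcal{F}(x_j)$, matching the indexing set of the rewritten left-hand side, so the two expressions are summed over the same index set with the same normalization $1/|\mathcal{F}(x_j)|$.

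The one remaining check, and the only place an honest calculation is required, is that the kernel arguments match. The left-hand side produces arguments $(R^{-1}((As+a)-x),\,R^{-1}AS)$, while the right-hand side yields $((A^{-1}R)^{-1}(s-A^{-1}(x-a)),\,(A^{-1}R)^{-1}S)$, which simplifies via $(A^{-1}R)^{-1}=R^{-1}A$ to exactly the same pair. The main obstacle is simply bookkeeping the semidirect product structure of $\mathbb{R}^3\rtimes\text{SO}(3)$ and the inversion formula $(a,A)^{-1}=(-A^{-1}a,\,A^{-1})$ without sign or order errors; once the kernel arguments are shown to agree, identical summands on the identical index set with identical normalizers complete the proof.
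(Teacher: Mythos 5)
Your proposal is correct and follows essentially the same route as the paper's proof: a direct expansion of both sides of the equivariance identity, re-indexing the outer sum via the frame equivariance $h\mathcal{F}(x_j)=\mathcal{F}(hx_j)$, a change of variables on the frame elements that cancels the $h^{-1}$ in the transformed feature map, and a verification that the kernel arguments coincide (your computation $(A^{-1}R)^{-1}=R^{-1}A$ is the same cancellation the paper performs via its $k_{\mathrm{R}}$ shorthand and the factorization $(\mathrm{x},\mathrm{R})^{-1}g\,(\mathrm{t},\mathrm{R}')=\left(g^{-1}(\mathrm{x},\mathrm{R})\right)^{-1}(\mathrm{t},\mathrm{R}')$). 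The only cosmetic difference is that the paper transforms the left-hand side into the right-hand side through a single chain of equalities, whereas you expand both sides and meet in the middle; the content is identical.
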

\begin{proof}
    See suppl. mat.
\end{proof}

Since $\mathcal{F}(x)$ can be constructed with local PCA, as explained below, it only consists of a few elements and the amount of computations is significantly reduced.

\begin{figure}[t]
    \centering
    \includegraphics*[width=\linewidth]{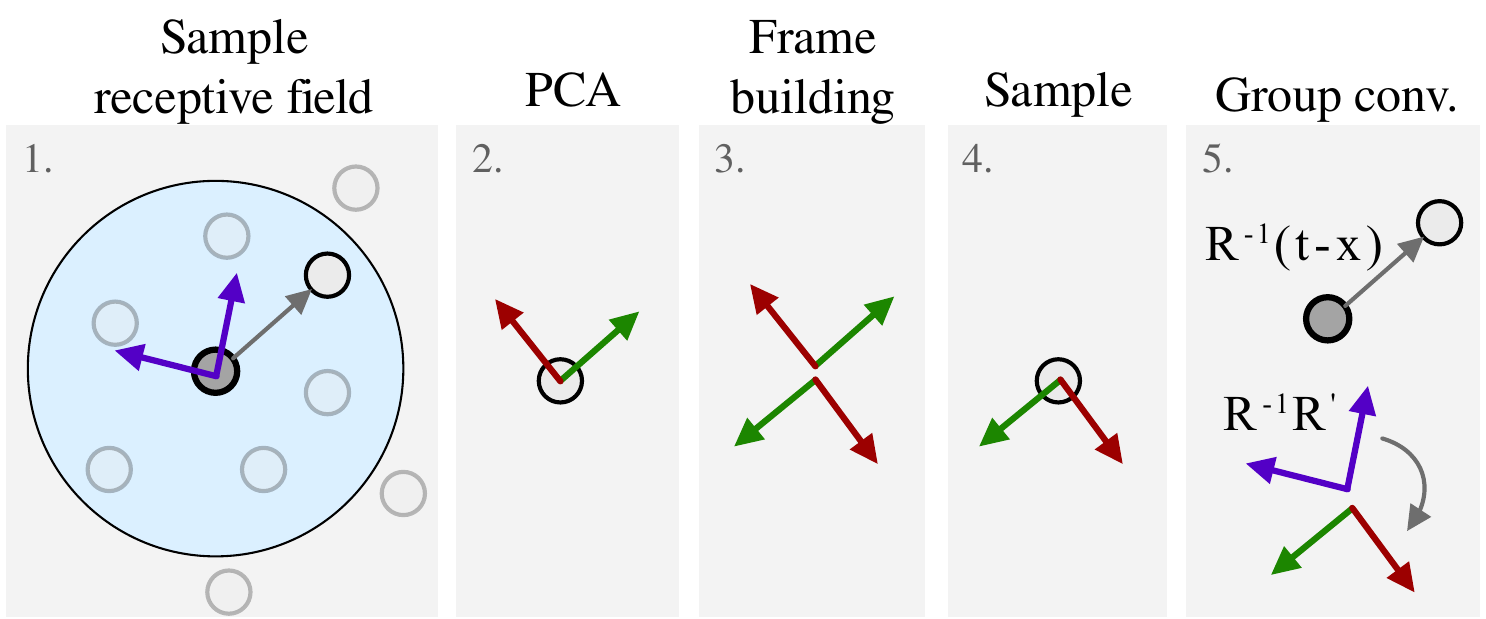}
    \caption{
    Overview of our convolution operation. 
    Given a central point with an orientation, 
    first, we sample neighboring points. 
    For each point, we use PCA to build a frame from it.
    Then, we sample an orientation from the frame.
    Then, the input to the group convolution kernel is the relative position plus the relative orientations between points.}
    \label{fig:main_idea}
\end{figure}

\textbf{Frame Construction.}
We compute PCA over a region around the point to construct $\mathcal{F}(x)$.
Due to the ambiguity of PCA \wrt the direction of the different axes, we follow Xiao \etal~\cite{xiao2020pca} and Puny \etal~\cite{puny2022frame} and construct 4 different \ac{LRF} by inverting the sign of the different directions.
Given the eigenvectors $[v_1, v_2, v_3]$ of the covariance matrix $C$ of the point coordinates, the frame can be defined as $\mathcal{F}(x) = \{ ( [ \alpha_1 v_1, \alpha_2 v_2, \alpha_3 v_3], t ) | \alpha_i \in \{1, -1\}\}$.
To be equivariant to the SE(3) group, $\mathcal{F}(x)$ is restricted to orthogonal, positive rotation matrices.
This results in a frame with a finite number of elements, $| \mathcal{F} (X) | = 2^{3-1} = 4$.

\textbf{Stochastic Approximation.}
Although $\mathcal{F}(x)$ only has 4 elements, this might still be restrictive for modern state-of-the-art deep architectures used to process large 3D scenes.
Therefore, we propose to perform a stochastic approximation of~\cref{eq:loco-roto-conv} during training by only sampling a subset of the elements of $\mathcal{F}(x)$ for input and output domains of the feature maps.
In particular, we propose randomly sampling two or even only one element of $\mathcal{F}(x)$ for each point $x$ in the point cloud where the convolution will be computed.
Then, during the computation of ~\cref{eq:loco-roto-conv}, only the sampled elements for points $x_j$ are used to approximate the SO$(3)$ integral.
Our approach is illustrated in~\cref{fig:main_idea}.

While using all elements of $\mathcal{F}(x)$ would increase the memory consumption of a standard model by a factor of $4$ and the number of computations by a factor of $16$, sampling $2$ elements would only increase the memory by a factor of $2$ and the computations by a factor of $4$.
More importantly, randomly sampling only $1$ element will maintain the memory consumption and computations equal to the model with standard convolutions.
During testing, since large batches are not necessary, we can use the full frame $\mathcal{F}(x)$ to compute ~\cref{eq:loco-roto-conv}. The error introduced by stochastic approximation by subsampling 2 or 1 element instead of using all 4 is discussed in the supplementary materials.

\textbf{Local vs Global Equivariance.} 
In practice, the locality of the kernel is enforced by calculating the convolution for a local neighborhood $N_x=\{x_j \in \mathbb{R}^3|\|x_j - x\| <r\}$ of $x$ only. 
Equivariance of~\cref{eq:lie-conv} and ~\cref{eq:loco-roto-conv} is ensured on a scale that depends on the receptive field used. 
Since we only consider a small receptive field around each point, our operations become equivariant \wrt rotations of the local geometry within this receptive field. By incorporating several layers in our architectures with increasing receptive fields, the model is able to capture patterns at different scales in an equivariant manner.
Ultimately, the whole model also covers the global equivariance scale since the last layers have an effective receptive field covering the entire scene.

\section{Experiments}
\label{sec:experiments}

We conduct experiments on object classification, and semantic segmentation to validate our methods. Due to space constraints, additional experiments, ablation studies, detailed dataset description and implementation are provided in the supplementary materials.

\subsection{Baselines.}
In our main experiments, we compare our convolution operation, \method{Ours}, to the same model where the integral is solved using \ac{MC}~\cite{finzi2020generalizing}, \method{MC}, and a model using standard convolutions, \method{STD}.
Moreover, we also compare to additional rotation equivariant networks, relying on global and local equivariant designs.

\subsection{Shape classification}
We use the task of shape classification to measure the equivariant capabilities of the models \wrt global rotations.
For this task, predictions must be invariant of the rotation applied to the model.
We use a global pooling operation as the projection layer (~\cref{eq:proj-layer}) at the end of our encoder to transform the equivariant features into invariant ones.

\textbf{Dataset.}
We use the ModelNet40 dataset~\cite{wu2015modelnet} since this is a standard benchmark for rotation equivariant networks~\cite{deng2021vector}.
Our model only takes as input point coordinates, and performance is measured with overall accuracy.

\textbf{Experimental setup.}
We provide different configuration setups in our experiments.
All models are evaluated when trained and tested without any rotation, I / I.
Further, we evaluate all models trained without any rotation but random rotations during testing, I / SO(3).
Lastly, we evaluate our models with random rotations during training and testing.
Additionally, to compare to other state-of-the-art methods, we take the commonly used setup where random rotations are applied along the up vector during training and random rotations on SO(3) during testing, z / SO(3).
Although this setup is less challenging than I / SO(3), it allows us to compare to additional rotation equivariant models.

\textbf{Main results.}
In our main results, we compare our method, \method{Ours}, to \method{MC} and \method{STD} for different samples taken during training and testing.

\begin{table*}
\caption{Results for different configurations for the classification task on the ModelNet40 dataset. 
The results show that using our sampling approach increases the performance significantly, leading to better results with fewer samples.}
\label{tbl:main-results-modelnet}
\setlength{\tabcolsep}{12pt}
\centering
\small
\begin{tabular}{lcrrrrrrrrr}
    \toprule
    \multicolumn{1}{c}{Method} & \multicolumn{1}{c}{\# samp.} & \multicolumn{3}{c}{I / I}  & \multicolumn{3}{c}{I / SO(3)} & \multicolumn{3}{c}{SO(3) / SO(3)}  \\
    \cmidrule(l{2pt}r{2pt}){3-5}\cmidrule(l{2pt}r{2pt}){6-8}\cmidrule(l{2pt}r{2pt}){9-11}
    & train $\downarrow$/ test $\rightarrow$ & 1 & 2 & 4 & 1 & 2 & 4 & 1 & 2 & 4 \\
    \midrule
    \multirow{3}{*}{\method{MC}}
    & 1 & \cbg 85.4 & \cbg 84.6 & \cbg 83.1 & 78.8 & 74.1 & 70.1 & \cbg 86.5 & \cbg 85.6 & \cbg 84.4  \\
    & 2 & \cbg 86.2 & \cbg 87.0 & \cbg 87.1 & 80.3 & 82.3 & 82.3  & \cbg 87.1 & \cbg 87.0 & \cbg 87.0 \\
    & 4 & \cbg 84.2 & \cbg 87.4 & \cbg 87.5 & 78.4 & 85.6 & 86.2  & \cbg 85.4 & \cbg 88.3 & \cbg 88.2 \\
    \cmidrule{2-11}
    \multirow{3}{*}{\method{Ours}}
    & 1 & \cbg 86.9 & \cbg 86.8 & \cbg 86.7 & 85.5 & 85.3  & 85.3 & \cbg 88.7 & \cbg 88.5 & \cbg 88.5 \\
    & 2 & \cbg 87.9 & \cbg 87.9 & \cbg 87.7 & 86.6 & \textbf{86.9} & 86.8 & \cbg 88.9 & \cbg 88.7 & \cbg 88.7\\
    & 4 & \cbg 73.2 & \cbg 87.6 & \cbg 87.8 & 61.4 & 85.7 & 86.5 & \cbg 59.7 & \cbg \textbf{89.0} & \cbg 88.7 \\
    \midrule
    \method{STD} & & \cbg & \cbg \textbf{90.7} &\cbg & \multicolumn{3}{c}{12.3} & \cbg & \cbg 87.5 & \cbg \\
    \bottomrule
\end{tabular}
\end{table*}

\Cref{tbl:main-results-modelnet} presents the results of this experiment.
As expected, we can see that the standard method \method{STD} achieves good accuracy for I / I.
\method{Ours} and \method{MC}, as it is typical for rotation equivariant networks in this setup, achieve competitive performance but are below \method{STD}.
However, when we look at the more challenging setup, I / SO(3), we can see that \method{Ours} is able to maintain similar accuracy as in the I / I setup, $86.9\,\%$, a drop by only one point in accuracy, while \method{STD} achieves $12.3\,\%$.
\method{MC}, although it can also achieve competitive performance, for most of the cases, the drop in performance is significant compared to the I / I results.
When we look at the SO(3) / SO(3) setup, all three methods achieve good performance; \method{MC} and \method{Ours} are able to outperform \method{STD}, while \method{Ours} achieves the best accuracy. 

Analyzing the effect of different samples used to compute the integral over SO(3) for training and testing, we can see that \method{Ours}, even with 1 sample, can achieve similar results than when using 4 samples.
With only 2 samples, our method is able to match or even surpass the accuracy of using the full frame, 4 samples.
Moreover, using only 1 or 2 samples appears to be more robust than using the full frame, 4 samples, when tested with different numbers of samples. We hypothesize that training with random 1 or 2 samples, rather than using the full frame, introduces stochasticity that acts as a regularizer, enhancing robustness to errors in SO(3) integral estimation. In contrast, \method{MC} is more sensitive to the number of samples, exhibiting significant performance degradation with 1 or 2 samples.

\textbf{Comparison to other methods.}
First, we compare our model to existing non-equivariant point-based network architectures, architectures that rely on global equivariance, and models like ours that use group convolutions to achieve local equivariance.
In ~\cref{tbl:modelnet_sota}, we can see that our model achieves the best performance among the group convolution tested by a large margin in the I/SO(3) setting.
This is due to the discretization of the group SO(3) used by the \method{EPN}~\cite{chen2021equivariant} and \method{E2PN} methods~\cite{zhu2023e2pn}. Also, in the z / SO(3) and SO(3) / SO(3) settings, we outperform all local rotation equivariant networks.
When compared to global equivariant networks, our method falls behind in the I / SO(3) setup and achieves similar performance on the z / SO(3) and SO(3) / SO(3) setup.
However, as we will show later in the segmentation task, while some global equivariant networks only slightly outperform ours on this task, they fail to solve tasks requiring local rotation equivariance. 

\begin{table}[htbp]
\caption{Comparison to equivariant models on the classification task of ModelNet40 for different setups.}
\label{tbl:modelnet_sota}
\setlength{\tabcolsep}{3pt}
\centering
\small
\begin{tabular}{llccc}
    \toprule
    Equiv. & \multicolumn{1}{c}{Method} & \multicolumn{1}{c}{I / SO(3)} & \multicolumn{1}{c}{z / SO(3)} & \multicolumn{1}{c}{SO(3) / SO(3)}\\

    \midrule

    \multirow{5}{*}{\rotatebox[origin=c]{90}{None}}
    & \method{PointNet}~\cite{qi2017pointnet} & -- & \cbg 19.6 & \textbf{84.9} \\
    & \method{PointNet++}~\cite{qi2017plusplus} & 13.8 & \cbg 28.4 & \textbf{84.9} \\
    & \method{DGCNN}~\cite{dgcnn} &  \textbf{17.3} & \cbg 33.8 & 84.8 \\
    & \method{PointCNN}~\cite{YangyanPCNN} & -- & \cbg \textbf{41.2} & 84.8 \\   
    & \method{KPConv}~\cite{thomas2019KPConv} & 12.7 & \cbg  -- & 81.2 \\
    \midrule
    \multirow{5}{*}{\rotatebox[origin=c]{90}{Global}}
    & \method{GC-Conv}~\cite{zhang2020gc} & -- & \cbg 89.1 & 89.2 \\
    & \method{FA-PointNet}~\cite{puny2022frame} & 85.9 & \cbg 85.5  & 85.8 \\
    & \method{FA-DGCNN}~\cite{puny2022frame} & 88.4 & \cbg 88.9 & 88.5\\
    & \method{VN-PointNet}~\cite{deng2021vector} & 77.2 & \cbg 77.5 & 77.2 \\
    & \method{VN-DGCNN}~\cite{deng2021vector} & \textbf{90.0} & \cbg \textbf{89.5} &\textbf{90.2} \\

    \midrule
    \multirow{6}{*}{\rotatebox[origin=c]{90}{Local}}
    & \method{TFN}~\cite{thomas2018tensor} & -- & \cbg 85.3 & 87.6 \\
    & \method{ClusterNet}~\cite{chen2019clusternet} & -- & \cbg 86.4 & 86.4 \\
    & \method{RI-Conv}~\cite{zhang-riconv-3dv19} & -- & \cbg 86.4 & 86.4 \\
    & \method{SPHNet}~\cite{poulenard2019rotinv} & -- & \cbg 86.6 & 87.6 \\
    & \method{EPN}~\cite{chen2021equivariant} &  32.3 & \cbg-- &87.8 \\
    & \method{E2PN}~\cite{zhu2023e2pn} &  44.4 &\cbg -- & 88.6 \\

    \cmidrule{2-5}

    & \method{Ours} & \textbf{86.9} & \cbg \textbf{87.0} & \textbf{89.0} \\
  
    \bottomrule
\end{tabular}
\end{table}

\begin{figure*}[htbp]
    \centering
    \includegraphics*[width=\linewidth]{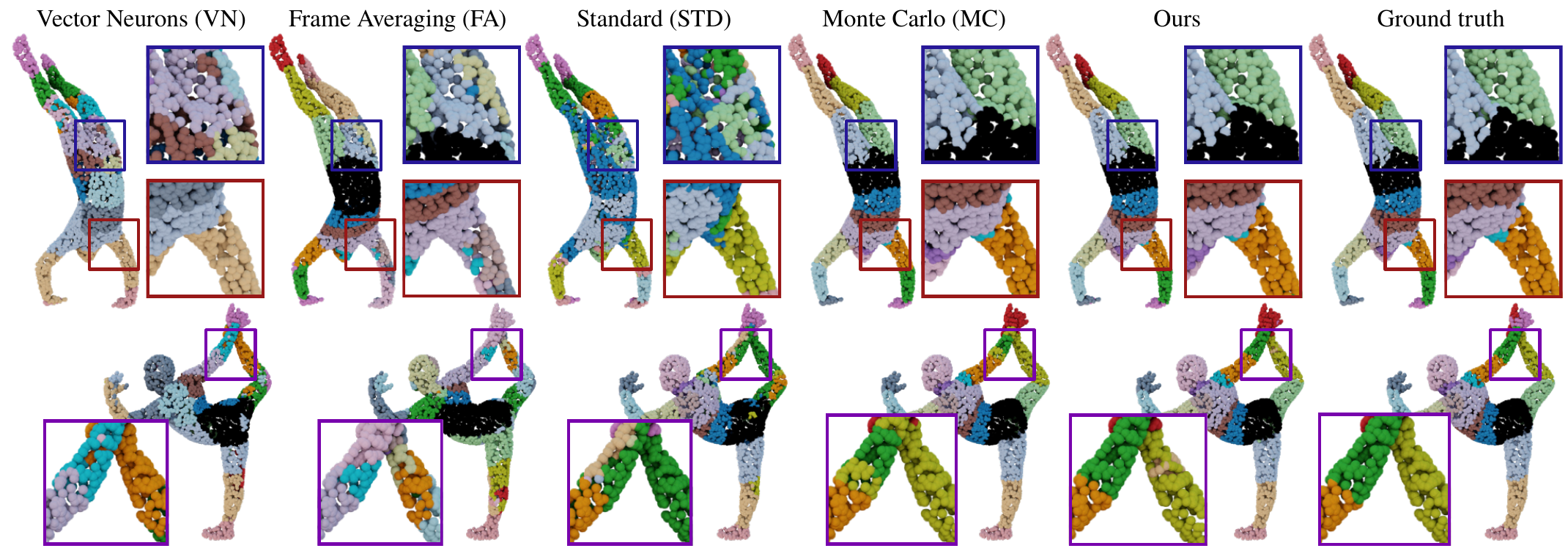}
    \caption{\textbf{Qualitative results.} Global equivariant methods such as \method{VN}, or \method{FA} struggle with out-of-distribution models.
    Our method, on the other hand, achieves almost perfect predictions.
    Lastly, \method{MC} also achieves good performance but falls behind our method, which better approximates the group convolution integral.}
    \label{fig:qualitative}
\end{figure*}

\subsection{Semantic segmentation}
\label{sub-sec:segmentation}

In semantic segmentation, incorporating symmetries like SE(3) equivariance is key for generalization, especially due to the varying orientations and part compositions in point clouds. We evaluate our method on body part segmentation and scene understanding.

\subsubsection{Human body parts}
For semantic segmentation of human body parts, the local equivariance property is essential to distinguish correctly between parts undergoing diverse SE(3) transformations within the kinematic tree. 
Due to the additional symmetry information, we show that our models can generalize to unseen, out-of-distribution poses.

\textbf{Dataset.}
For training and testing, we use two subsets of the AMASS meta-dataset~\cite{mahmood2019amass}, DFAUST~\cite{bogo2017dynamic} and PosePrior~\cite{akhter2015pose}, respectively. 
The PosePrior dataset consists of challenging poses significantly divergent from those executed in DFAUST, which we use to test our model for generalization to unseen, out-of-distribution poses.

\textbf{Experimental setup.}
To assess the ability of our method to generalize to local transformations, we adopt a setup in which we do not use any rotation during training or testing.
Since the testing data is composed of rare poses not seen during training, the models must become invariant to transformations of the different local parts.

\textbf{Main results.}
\Cref{tbl:main-results-dfaust} presents the results of the main experiment.
We can see that \method{STD} struggles to generalize to these out-of-distribution poses, achieving a mAcc of $85.3$ and mIoU of $74.5$.
\method{Ours}, on the other hand, achieves better performance with $95.0$ mAcc and $90.8$ mIoU.
\method{MC} can also achieve competitive performance, but, as in the classification task, this is lower than our proposed approach.

\begin{table}
    \caption{Semantic segmentation results for different models trained on DFAUST and tested on PosePrior. By using our sampling approach, mAcc, and mIoU increase significantly with only a few samples of the frame.}

    \centering
    \small
    \setlength{\tabcolsep}{4pt}
    \label{tbl:main-results-dfaust}
    \begin{tabular}{lcrrrrrr}
        \toprule
        \multicolumn{1}{c}{Method} & \multicolumn{1}{c}{\# samp.} & \multicolumn{3}{c}{mAcc} & \multicolumn{3}{c}{mIoU} \\
        \cmidrule(l{2pt}r{2pt}){3-5}\cmidrule(l{2pt}r{2pt}){6-8}
        & {\small train $\downarrow$/ test $\rightarrow$} &  1 & 2 & 4 & 1 & 2 & 4\\
        \midrule
        \multirow{3}{*}{\method{MC}}
        & 1 & \cbg 93.1 & \cbg93.0 & \cbg92.7 &  87.7 &  87.6 &  87.2 \\
        & 2 & \cbg 93.8 &\cbg 93.9 & \cbg93.8 &  88.8 &  89.0 &  88.7\\
        & 4 & \cbg 93.4 &\cbg 94.2 & \cbg94.4 &  87.9 &  89.3 &  89.7\\
        \cmidrule{2-8}
        \multirow{3}{*}{\method{Ours}}
        & 1 & \cbg 93.8 &\cbg 93.9 & \cbg93.9 &  88.9 &  88.9 &  89.0\\
        & 2 & \cbg 94.3 &\cbg 94.4 & \cbg94.5 &  89.7 &  89.9 &  89.9\\
        & 4 & \cbg 32.6 & \cbg92.4 & \cbg\textbf{95.0} &  21.6 &  86.8 &  \textbf{90.8} \\
        \midrule
        \method{STD} &   &\cbg & \cbg85.3 & \cbg & \multicolumn{3}{c}{74.5} \\
        \bottomrule
    \end{tabular}
\end{table}

\begin{table}
    \caption{Comparison of our method to other rotation equivariant models on the segmentation task for out-of-distribution poses.}
    \centering
    \small
    \setlength{\tabcolsep}{13pt}
    \label{tbl:faust_sota}
    \begin{tabular}{llcc}
        \toprule
        Equiv. & \multicolumn{1}{c}{Method} & \multicolumn{1}{c}{mAcc} & \multicolumn{1}{c}{mIoU}\\
        \midrule
        \multirow{4}{*}{\rotatebox[origin=c]{90}{Global}}
        & \method{FA-PointNet}~\cite{puny2022frame} & \cbg 77.4 & 64.7  \\
        & \method{FA-DGCNN}~\cite{puny2022frame} & \cbg 81.7 & 71.0  \\
        & \method{VN-PointNet}~\cite{deng2021vector} & \cbg  63.1 & 47.5  \\
        & \method{VN-DGCNN}~\cite{deng2021vector} & \cbg  61.1 & 46.6  \\
    
        \midrule
        \multirow{3}{*}{\rotatebox[origin=c]{90}{Local}}
         & \method{EPN}~\cite{chen2021equivariant} & \cbg 89.9 & 82.3\\
         & \method{E2PN}~\cite{zhu2023e2pn} & \cbg  94.8 & 90.7\\
         & \method{Ours} & \cbg  \textbf{95.0} & \textbf{90.8} \\
      
        \bottomrule
    \end{tabular}

\end{table}

When evaluating the model robustness to the number of samples in the SO(3) integral, \method{Ours} outperforms \method{MC} in all cases except when trained on 4 samples but tested on one, as seen in the classification task.

\textbf{Comparison to other methods.}
In Tbl.~\ref{tbl:faust_sota}, we present the results of comparing our method to other global and local equivariant point-based networks.
We can see that \method{Ours} achieves an impressive performance of $95.0$ mAcc and $90.8$ mIoU.
Contrary to the task of shape classification, global equivariant models struggle to generalize to out-of-distribution local transformations not seen during training.
Fig.~\ref{fig:qualitative} depicts predictions for different models tested on the dataset.
The results show that global equivariant methods such as \method{VN} or \method{FA} struggle with out-of-distribution models, confusing legs and arms and right and left. 
The same is true for our non-equivariant version, \method{STD}. 
The training data contain mostly upright positions, i.e., feet are, on average, further down on the z-axis. In contrast, the hands and the head are further up, leading to generalization errors in those models, e.g., the handstand pose as shown in Fig.~\ref{fig:qualitative}.
\method{Ours}, on the other hand, achieves predictions comparable to the ground truth annotations despite never seen those extreme poses during training.
\method{MC} also achieves remarkable performance but performs several prediction mistakes due to inefficient sampling of Frame elements as Tbl.~\ref{tbl:main-results-dfaust} indicates.

When comparing to current state-of-the-art local equivariant methods, we can see that while they also outperform global equivariant methods by a large margin, our method gives superior results, with \method{E2PN}~\cite{zhu2023e2pn} reaching a slightly lower performance.

\begin{table}
\caption{Computational and memory resources of a single convolution layer for our approach and state-of-the-art methods.}
\label{tbl:memory}
\centering
\small
\setlength{\tabcolsep}{12pt}
\begin{tabular}{lcrr}
    \toprule
    \multicolumn{1}{c}{Method} & \# samp. & \multicolumn{1}{c}{Mem. (Mb) $\downarrow$} & \multicolumn{1}{c}{FPS $\uparrow$}\\
    \midrule
    \method{STD} & & 37.1 & 704.2 \\
    \cmidrule{2-4}
    \multirow{3}{*}{\method{Ours}} 
    & 1 &  37.1  & 581.4\\
    & 2 &  76.9  & 432.9\\
    & 4 &  165.2 & 255.8\\
    \cmidrule{2-4}
    \method{E2PN}~\cite{zhu2023e2pn} & & 1211.6 & 45.0 \\
    \method{EPN}~\cite{chen2021equivariant} & & 1636.4 & 10.2 \\
    \bottomrule
\end{tabular}
\end{table}

Tbl.~\ref{tbl:memory} compares a forward pass of a single convolution layer using 1024 points and $256$ input and output features.
We can see that using only one sample to approximate the integral over SO(3) has approximately similar memory consumption and frames per second (FPS) as the non-SO(3) equivariant version of our model. 
This shows that with our method, we can introduce the equivariant property without extra costs, demonstrating the efficiency of our proposed model.
When we analyze the two-sample version of our group convolution, we can see that memory and computation increase by a factor of 2, still making it suitable for its applicability.
When using 4 samples, the memory and computations increase significantly. 
Compared to other state-of-the-art local rotation equivariant methods, \method{E2PN}~\cite{zhu2023e2pn} and \method{EPN}~\cite{chen2021equivariant}, the computational resources needed for our approach are significantly lower even when using 4 samples.

\subsubsection{Scene understanding}

Scenes consist of multiple parts or objects with arbitrary orientations, making local equivariance essential for generalizing to unseen configurations.

\textbf{Dataset.}
We test our method on ScanNet~\cite{dai2017scannet}, a dataset composed of several indoor 3D scene scans, to show its applicability to real-world scenarios.

\textbf{Experimental Setup.}
Since our surroundings have a notion of an up orientation, we fix the z-axis and conduct our experiments for SO(2).
We sample only one orientation from the frame for all experiments, which does not pose additional memory or computational burden on the model. This is a crucial property for processing such large point clouds, making it intractable for the other methods to run reasonable-sized networks for this task. 

\textbf{Main Results.}
Tbl.~\ref{tbl:scannet} shows that our method outperforms \method{STD} in all three configurations, underlining the benefits of baking SE(3) equivariance in the model architecture.
Compared to \method{MC}, we can see that our approach obtains better predictions in all but one configuration.

\begin{table}[t]
\caption{Results for the semantic segmentation task on ScanNet20 show that using our sampling approach increases the performance.}
\label{tbl:scannet}
\setlength{\tabcolsep}{5.5pt}
\centering
\small
\begin{tabular}{lcccccc}
    \toprule
    Method &  \multicolumn{2}{c}{I / I}  & \multicolumn{2}{c}{I / SO(2)} & \multicolumn{2}{c}{SO(2) / SO(2)}  \\

    \cmidrule(l{2pt}r{2pt}){2-3}\cmidrule(l{2pt}r{2pt}){4-5}\cmidrule(l{2pt}r{2pt}){6-7}
    & mAcc & mIoU & mAcc & mIoU & mAcc & mIoU \\
    \midrule
    \method{MC} & 73.4 & 64.5   & \cbg \textbf{74.1} & \cbg 65.2 & 74.2  & 65.7 \\
    \cmidrule(l{2pt}r{2pt}){2-7}
    \method{Ours} & \textbf{73.6} & \textbf{65.6} & \cbg 72.7 & \cbg \textbf{65.4} & \textbf{75.6}  & \textbf{67.5} \\
    \midrule
    \method{STD} & 73.0 & 64.4 & \cbg 70.9 & \cbg 63.5 & 74.5 & 66.4 \\
    \bottomrule
\end{tabular}
\end{table}

\section{Conclusions}
\label{sec:conclusions}

This paper presents an instance of group convolutions on the continuous domain, which is equivariant to SE(3).
Using a carefully constructed subset of group elements makes our operation computationally and memory efficient, obtaining competitive performance when only one sample is taken to solve the integral over the SO(3) group and, therefore, not requiring additional resources over a standard convolution. 
Moreover, by restricting the receptive field of our convolution, our operation becomes local equivariant, allowing us to be robust to local transformations.
Our extensive evaluation presents our approach as a viable solution to incorporate local equivariance in deep network architectures for point clouds without significant additional cost.
\newpage
{
    \small
    \bibliographystyle{ieeenat_fullname}
    \bibliography{main}
}
\clearpage
\setcounter{page}{1}
\maketitlesupplementary
\appendix

\section{Proof of \Cref{theorem}}
$\Phi_{\mathcal{F}}$ is equivaraint to SE$(3)$ iff $\forall g \in \text{SE}(3):  ( \Phi_\mathcal{F} \circ \rho^{\mathcal{X}(g)})(f)= (\rho^{\mathcal{Y}}(g)\circ \Phi_\mathcal{F})(f)$. 
\begin{proof}
Using the definition 
\begin{equation}
    k_{\text{R}}(\text{x, R'}) \coloneq k(\text{R}^{-1}\text{x}, \text{R}^{-1} \text{R'}),
\end{equation}
omitting the normalization constant $\frac{1}{\lvert \mathcal{F}(x_j) \rvert}$ for brevity and using $g=\text{(t}_g, \text{ R}_g\text{)}$ we can write
\begin{equation}
\small
    \begin{split}
        &\big[( \Phi_\mathcal{F} \circ \rho^{\mathcal{X}(g)})(f)\big](\text{x, R})= \\
        &\sum_{j} \sum_{(\text{t, R'})\in \mathcal{F}(x_j)}  f(g^{-1}\text{(t, R')})k(\text{(x, R)}^{-1}\text{(t, R')})=\\
        &\sum_{j} \sum_{(\text{t, R'})\in \mathcal{F}(x_j)}  f(g^{-1}\text{(t, R')})k_{\text{R}}(\text{t} - \text{x},\text{R'})=\\
        &\sum_{j} \sum_{(\text{t, R'})\in \mathcal{F}(x_j)}  f(\text{R}_g^{-1}(\text{t}-\text{t}_g), \text{R}_g^{-1}\text{R'})k_{\text{R}}(\text{t} - \text{x},\text{R'})\overset{x_j \leftarrow \text{R}_g x_j + \text{t}_g}=\\
        &\sum_{j} \sum_{(\text{t, R'})\in \mathcal{F}(\text{R}_g x_j + \text{t}_g)}  f(\text{R}_g^{-1}(\text{t}-\text{t}_g), \text{R}_g^{-1}\text{R'})k_{\text{R}}(\text{t} - \text{x},\text{R'})\overset{\text{equiv. of }\mathcal{F}}=\\
        &\sum_{j} \sum_{(\text{t, R'})\in \text{R}_g\mathcal{F}( x_j )+ \text{t}_g}  f(\text{R}_g^{-1}(\text{t}-\text{t}_g), \text{R}_g^{-1}\text{R'})k_{\text{R}}(\text{t} - \text{x},\text{R'})=\\
        &\sum_{j} \sum_{(\text{R}_g\text{t} +\text{t}_g, \text{R}_g\text{R'})\in \mathcal{F}( x_j )}  f(\text{R}_g^{-1}(\text{t}-\text{t}_g), \text{R}_g^{-1}\text{R'})k_{\text{R}}(\text{t} - \text{x},\text{R'})=\\
        &\sum_{j} \sum_{(\text{t, R'})\in \mathcal{F}( x_j )}  f(\text{t, R'})k_{\text{R}}(\text{R}_g\text{t} +\text{t}_g - \text{x},\text{R}_g\text{R'})=\\
        &\sum_{j} \sum_{(\text{t, R'})\in \mathcal{F}( x_j )}  f(\text{t, R'})k(\text{(x, R)}^{-1} (\text{t}_g, \text{ R}_g)(\text{t}, \text{ R'}))=\\
        &\sum_{j} \sum_{(\text{t, R'})\in \mathcal{F}( x_j )}  f(\text{t, R'})k\big((\text{(t}_g, \text{ R}_g\text{)}^{-1}\text{(x, R)})^{-1} \text{(t, R')}\big)=\\
        &\big[(\Phi_\mathcal{F})(f)\big](g^{-1}\text{(x, R)})= \big[(\rho^{\mathcal{Y}}(g)\circ \Phi_\mathcal{F})(f)\big]\text{(x, R)} \\
    \end{split}
\end{equation}
\end{proof}

\section{Relation to concurrent work}
Previous works have also achieved (piecewise) equivariance by averaging over a frame~\cite{xiao2020pca,puny2022frame}.
While Xiao \etal~\cite{xiao2020pca} only explore this idea to obtain global rotation equivariance, Puny \etal~\cite{puny2022frame} propose a more general framework based on the same idea. The concurrent work of Atzmon~\etal~\cite{atzmonapproximately} uses this idea to achieve piecewise (local) equivariance by applying frame averaging for partitions of individually transforming regions. In these works, the same frame is used for each point in the point cloud (global) or partitions (piecewise/local) for the symmetrization of, e.g., a neural network by transforming their (partitions') domain. 
For Atzmon~\etal~\cite{atzmonapproximately}, a partition prediction network is necessary since simply increasing the number of partitions to reduce partition errors limits the expressivity of the resulting equivariant point network; when each point belongs to one partition, the only shared equivariant function is constant.
In contrast, this work is based on group convolutions, where point convolutions in a neural network are lifted to the SE(3) group so the kernel can detect rotated patterns. In this context, the concept of equivariant frames is used to define a point-specific grid on SE(3) to solve the convolution integral over the SE(3) group efficiently. Since we create point-specific frames, we avoid the need to group points into regions that can rotate jointly. The locality of the features created by the proposed convolution operator is determined by the point neighborhoods used for feature aggregation. Stacking several layers of such convolution operators results in a network capable of detecting local equivariant features up to features equivariant to the accumulated receptive field among the layers. Using efficient SE(3) equivariant convolutions to construct a network, while it can potentially overfit to global features, provides a more general framework than applying symmetrization of the network for constructed regions.

\section{Pose estimation}
\label{sec:posestimation}
Additionally, we evaluate our convolutions in the pose estimation task.
In this task, the model aims to recover the relative rotation between two point clouds of the same shape.

\textbf{Dataset.}
We follow Zhu et al.~\cite{zhu2023e2pn} and use the airplane category from the ModelNet40 dataset for this task, composed of $626$ models in the training set and $100$ in the test set.
As Zhu et al.~\cite{zhu2023e2pn}, we also sample the surface of the shapes with $1024$ points to generate the point clouds, which are randomly rotated to form a pair.

\textbf{Experimental setup.}
Existing methods such as \method{EPN}~\cite{chen2021equivariant} and \method{E2PN}~\cite{zhu2023e2pn} rely on the discretization of the SO(3) group to achieve equivariance.
Therefore, their pose estimation network must predict an assignment between the two discrete sets of rotations plus a displacement to cover the full SO(3) space.
Our convolutions, on the other hand, do not rely on a discrete sampling of the SO(3) space and work directly in continuous space.
Therefore, our models only need to predict an assignment between the $4$ reference frames, which can be framed as a contrastive learning problem. 

\textbf{Main results.}
\Cref{tab:pose-estimation} presents the results of this experiment.
We can see that our model, thanks to operating in continuous space, can achieve an angular error orders of magnitude smaller than the existing methods for any number of samples used in the layers of the network.

\begin{table}[]
    \centering
    \small
    \caption{Error in degrees of different methods on the task of pose estimation on ModelNet40.}
    \label{tbl:pose}
    \setlength{\tabcolsep}{5pt}
    \begin{tabular}{ccccc}
        \toprule
       Metrics & \# samp. & Mean(\degree) & Median(\degree) & Max(\degree)\\
       \midrule
       \method{EPN}~\cite{chen2021equivariant} & & 1.10 & 1.36 & 7.06\\
       \method{E2PN}~\cite{zhu2023e2pn} & & 1.20 & 0.96 & 6.71\\
       \multirow{3}{*}{\method{Ours}}  & 4 & $4.4 \times 10^{-5}$ & $6.7\times 10^{-5}$  & $2\times 10^{-3}$ \\
       & 2 & $4.9 \times 10^{-5}$ & $6.7\times 10^{-5}$  & $2\times 10^{-3}$ \\
       & 1 & $4.9 \times 10^{-5}$ & $6.7\times 10^{-5}$  & $2 \times 10^{-3}$ \\
       \bottomrule
    \end{tabular}
    
    \label{tab:pose-estimation}
\end{table}

\section{Dataset details}

\paragraph{Shape classification.}The ModelNet40 dataset~\cite{wu2015modelnet} is composed of synthetic CAD models from $40$ different classes.
The dataset is divided into two splits, where $9,843$ objects are used for training and $2,468$ for testing.
Since each model is composed of multiple faces, we sample $4,096$ points using farthest point sampling on the surface. 
\paragraph{Semantic segmentation: human body parts.}
For training, we use the train split of DFAUST~\cite{bogo2017dynamic} used in \cite{atzmon2022frame, chen2021snarf} and follow Feng~\etal~\cite{feng2023generalizing} to create $15,430$ point clouds by sampling $4,096$ points across the mesh surface. 
The PosePrior dataset~\cite{akhter2015pose} consists of challenging poses significantly divergent from those executed in DFAUST, which we use to test our model for generalization to unseen, out-of-distribution poses. 
Following the procedure of the train set, we derive $3,760$ point clouds with $4,096$ points each from this dataset for testing.
\paragraph{Semantic segmentation: scene understanding.} 
We follow the standard train and validation split of ScanNet~\cite{dai2017scannet} and use color $[r,g,b]$ as input point features in addition to the 3D coordinates.

\section{Implementation details}
In this section, implementation details are given, and the architecture of the network used is introduced. Classification, pose estimation, and segmentation tasks share the same encoder structure, yet a decoder is used to provide point-wise predictions for the latter. 

\paragraph{Frame computation.}
To compute the local PCA for each point in the point cloud, we select $16$ neighbors using \ac{kNN}.
We compute the covariance matrix from the points and define the frame from the axes of PCA;  \cref{tbl:knn_size} shows an ablation of $k$.

\paragraph{Rotation representation.}
To represent the relative rotations between neighboring points that we give as input to the learnable kernel, we use the 6D representation proposed by Zhou~\etal~\cite{zhou2019continuity}.
However, other viable representations, such as quaternions or rotation matrices, could be used.

\subsection{Network architecture}
For our experiments, our model uses ResNetFormer blocks~\cite{yu2022metaformer} as the main computational blocks in the encoder and an FPN decoder~\cite{kirillov2019pfn} for tasks requiring per-point predictions.
The different point cloud resolutions are computed using Cell Averaging~\cite{thomas2019KPConv}.
In our convolution operation, we define our kernel as a single layer \ac{MLP} with $32$ hidden neurons and GELU activation functions. The output of our network is several feature vectors for each point that correspond to the sampled rotations. We use mean pooling as the projection layer \cref{eq:proj-layer} to get the final output feature per point, but any other pooling can be used. 

\subsubsection{Encoder}
\label{arch:encoder}
The input point cloud is transformed into $n$ down-scaled versions of itself using the Cell Average (CA) method~\cite{thomas2019KPConv}. For the first down-scaling, the size of the voxel cells used in the CA algorithm is a hyper-parameter, $d$, which is then sequentially doubled for each of the following down-scaling steps. The initial features are obtained with a patch encoder similar to the one used in vision transformers~\cite{dosovitskiy2010image}. The patch encoder allows us to extract features from a smaller cell size, which usually increases the model's performance as more points are available while keeping computational costs within limits. We use one additional level with two convolutions for the patch encoder for the classification and segmentation task on DFAUST; for ScanNet20, we skip the patch encoder. The (extracted) initial features are further processed with a set of Metaformer blocks~\cite{yu2022metaformer} before being transferred down to the next down-scaled point cloud via a convolution operation. This procedure is iterated until we reach the final down-scaled point cloud. For pose estimation and classification, we use $n=5$ and mean-aggregation of features in the case of classification. The aggregated feature vector is then passed through a linear layer to perform the final prediction. For the segmentation task on DFAUST and ScanNet20, the features of each level of the encoder serve as input to the decoder, with $n=4$ and $n=5$, respectively. 

\paragraph{Metaformer blocks.} We incorporate the block design defined by Yu \etal~\cite{yu2022metaformer} into our architecture, replacing the attention module of transformers with our point convolution. Each block consists of two residual blocks. In the first one, feature updates are computed using point convolution, while in the second one, updates are determined through a point-wise MLP with two layers. In this MLP, the initial layer doubles the feature count, while the second layer reduces it to the desired output number.

\subsubsection{Decoder}
\label{arch:decoder}
Our Decoder architecture is based on the feature pyramid network proposed by Kirillov~\etal~\cite{kirillov2019pfn}. The input to the decoder is the feature map of the down-scaled point cloud for which a stepwise up-sampling with our point convolutions is employed, progressing from the lowest level to the first down-scaled point cloud. To enhance information and gradient flow, we incorporate skip connections, where features from both the encoder and decoder are summed, producing a distinct feature map for each down-scaled point cloud. Subsequently, each feature map is up-sampled to the initial down-scaled point cloud through a singular up-sampling operation. The resulting $n$ feature maps are then aggregated through summation. If applicable, this feature map is put through a patch decoder, inverting the patch encoder operation. Finally, it is up-sampled to the intended prediction positions by a final convolution and processed by a one-layer MLP to obtain the point-wise predictions.

\section{Experimental setup}

In all experiments, we use AdamW~\cite{loshchilov2018decoupled} as optimizer with a weight decay value of $1^{-4}$ and OneCycleLr~\cite{smith2019super} as learning rate scheduler. Moreover, we employ drop residual paths depending on the depth of the layer~\cite{larsson2017ultra} and gradient clipping for gradient norms exceeding 100. We used label smoothing with a parameter of $0.2$ to prevent overfitting. All models were trained on a single NVIDIA GeForce RTX 3090. The experiment-specific setup is given below.

\paragraph{Classification.}
For the point cloud classification experiments on ModelNet40, we use the encoder architecture explained in~\cref{arch:encoder} with the number of blocks and the number of features for each level equal to [2, 3, 4, 6, 4] and [32, 64, 128, 256, 512], respectively. The initial grid resolution was $d= 0.05$ and a maximum drop rate of $0.2$. All models were trained for 500 epochs using a batch size of 12, with a learning rate of $0.01$ and an initial and final division factor of $100$ and $10000$. We used jitter coordinates, mirroring, and random scale augmentation during training. 

\paragraph{Pose estimation.}
For the pose estimation experiment on the airplane category of ModelNet40, we use the same model as for classification.
Note that the projection layer is omitted for this experiment since equivariance instead of invariance is needed. The model using all four frame elements was trained for 500 epochs with a batch size of 8. Using only 2 or 1 element takes longer for the model to converge; we trained for 2k and 4k epochs with batch sizes of 16 and 64, respectively.

\paragraph{Segmentation.}
For the segmentation task on the DFAUST dataset, we again used the encoder architecture introduced in~\cref{arch:encoder} with two blocks per level and a number of features equal to [32, 64, 128, 256]. We trained all models with a batch size of 32 for 150 epochs. The maximum learning rate was 0.005, with an initial division factor of 10 and 1000 as the final factor. Jitter coordinates are used as augmentation during training; the initial grid resolution was $d= 0.05$, and a maximum drop rate of $0.5$. To get the point-wise prediction, the decoder architecture of \cref{arch:decoder} was employed. 

For the segmentation task on the ScanNet20 dataset, we used the same encoder and decoder architecture as for DFAUSt, described in \cref{arch:encoder} and \cref{arch:decoder}, but used five levels with [2, 3, 4, 6, 4] blocks and 
[64, 128, 192, 256, 320] feature dimensions. The initial grid resolution was $d=0.1$, and all models were trained using 250 batches for 600 epochs using standard augmentations such as jitter coordinates, mirroring, random scaling, elastic distortion, and translation.

\paragraph{Projection layer.} Our proposed method provides features for each sample of the SO3 group; if four samples are used, four features per point result. Hence, we must aggregate those features to get to the final point-wise prediction or before averaging in the classification task to get the point-wise invariant feature vectors. We use mean-pooling over the features corresponding to the same coordinates.

\section{Additional qualitative results}
 \Cref{fig:supp_seq} and \cref{fig:supp_qual} provide additional qualitative results.

\begin{figure*}
    \centering
    \includegraphics*[width=\linewidth]{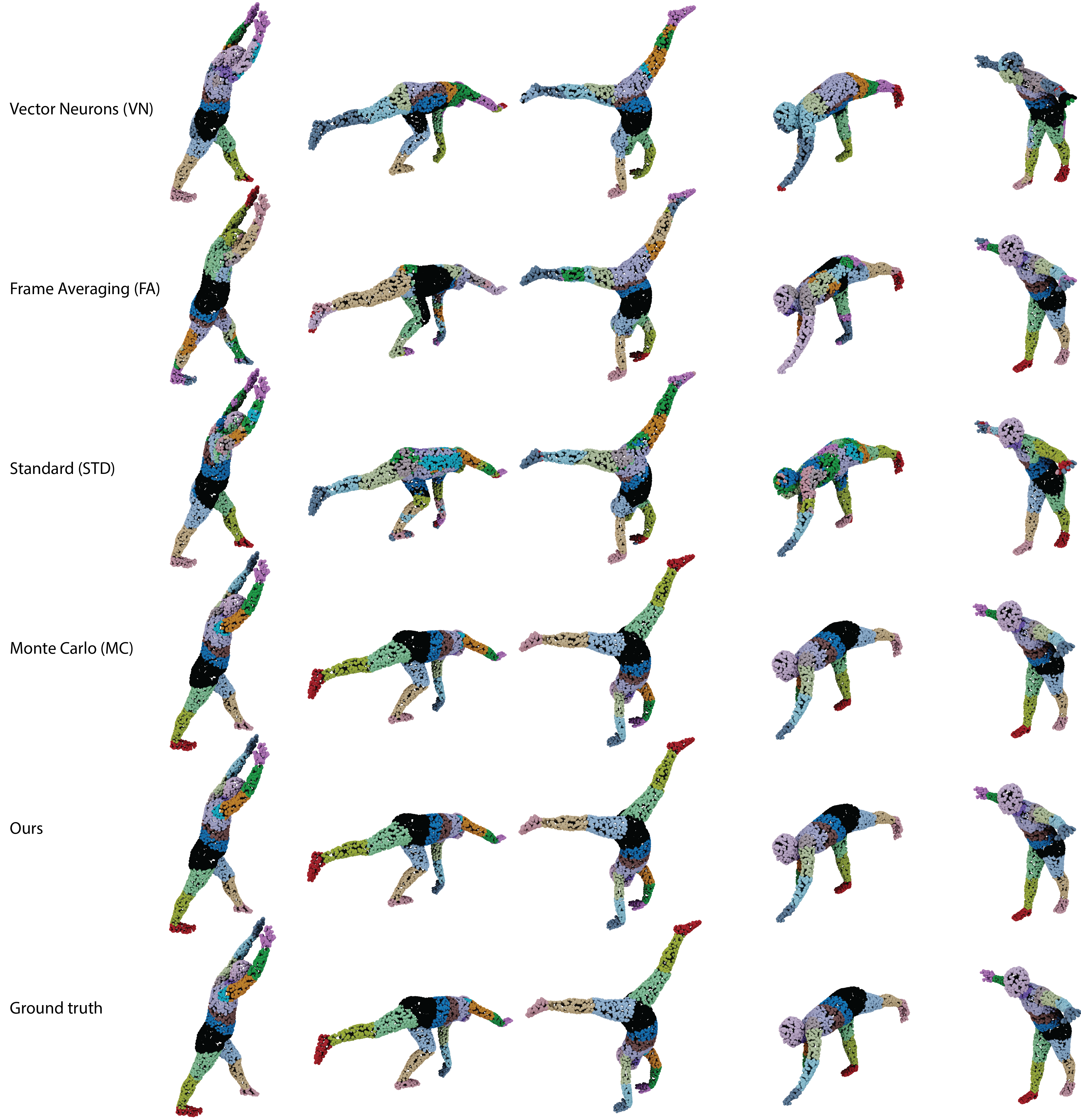}
    \caption{\textbf{Additional Qualitative results.} Global equivariant methods such as \method{VN}, or \method{FA} struggle with out-of-distribution models, especially up-side down models.
    Our method, on the other hand, achieves almost perfect predictions.
    Lastly, \method{MC} also achieves good performance but falls behind our method, as seen in the leftmost columns when looking at the left upper arm prediction.}
    \label{fig:supp_seq}
\end{figure*}

\begin{figure*}
    \centering
    \includegraphics*[width=\linewidth]{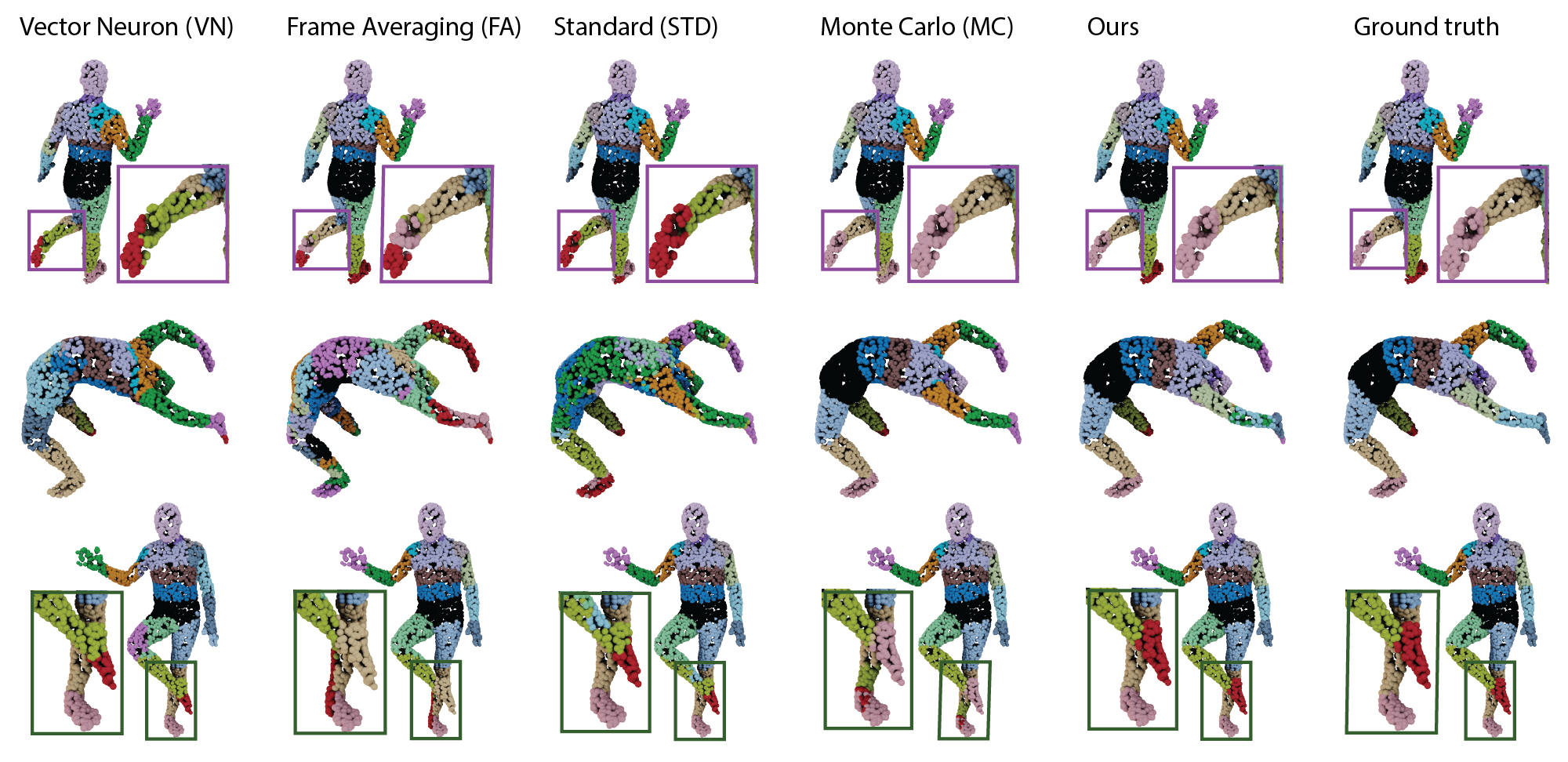}
    \caption{\textbf{Additional Qualitative results.} Global equivariant methods such as \method{VN}, or \method{FA} struggle with out-of-distribution models.
    Our method, on the other hand, achieves almost perfect predictions.
    Lastly, as seen in all three examples, \method{MC} also achieves good performance but falls behind our method.}
    \label{fig:supp_qual}
\end{figure*}

\section{Additional ablations}
In this section, we provide ablation experiments to evaluate the robustness of our approach against the number of neighbors used in the PCA computation and against noise and point density. Further, we provide results for allowing different numbers of samples to be used for the approximation of the SO(3) integral during training, called \emph{sample mixing}. Finally, we discuss the effect of stochastic sampling of the point-specific grid on SE(3).

\paragraph{Sample mixing.}
To achieve the best results with minimal training and inference time, we explored \emph{sample mixing}. Instead of always using the same amount of samples of SO3 elements, the number of samples used changes per step/forward pass and is chosen with the following probability. 1 sample is used with a probability of 50\%, 2 samples with a 35\% and 4 with a 15\% probability. \Cref{tbl:mix-supp-modelnet} and \cref{tbl:mix-supp-dfaust} show the results for the classification and segmentation tasks, respectively. We can see that training with sample mixing and testing with one sample equals or exceeds the performance of training with one sample only for \method{MC} and \method{Ours}. Further, the results are more stable concerning different numbers of samples used during testing than when training with a fixed number of samples. Sample mixing is thus a viable option when limited resources are available. Moreover, training with larger batch sizes becomes feasible by further allowing different numbers of samples not only between but also within batches. \Cref{tbl:time-supp} shows the number of minutes each epoch approximately takes during training with the mixing strategy, 1, 2, and 4 samples. Sample mixing with the proposed probabilities takes, on average, as long as training with two samples while delivering a more robust approximation of the integral.

\begin{table*}
\caption{Results for mixing the number of used samples throughout training for the classification task on the ModelNet40 dataset. 
}
\label{tbl:mix-supp-modelnet}
\setlength{\tabcolsep}{10pt}
\centering
\begin{tabular}{lcrrrrrrrrr}
    \toprule
    \multicolumn{1}{c}{Method} & \multicolumn{1}{c}{\# samp.} & \multicolumn{3}{c}{I / I}  & \multicolumn{3}{c}{I / SO(3)} & \multicolumn{3}{c}{SO(3) / SO(3)} \\
    \cmidrule(l{2pt}r{2pt}){3-5}\cmidrule(l{2pt}r{2pt}){6-8}\cmidrule(l{2pt}r{2pt}){9-11}
    & train $\downarrow$/ test $\rightarrow$ & 1 & 2 & 4 & 1 & 2 & 4 & 1 & 2 & 4 \\
    \midrule
    \method{MC}  & mix & \cbg 84.9 & \cbg 85.7 & \cbg 85.9 & 78.1 & 77.9 & 77.51 & \cbg 86.9 & \cbg 86.8 & \cbg  86.6  \\
    
    \cmidrule{2-11}
    \method{Ours} & mix & \cbg 87.2 & \cbg 86.9 & \cbg 87.0 & 86.3 & 86.2   & 86.3 & \cbg 88.4 & \cbg 88.5 & \cbg 88.4 \\
   
    \midrule
    \method{STD} & & \cbg & \cbg 90.7 &\cbg & \multicolumn{3}{c}{12.3} & \cbg & \cbg 87.5 & \cbg \\
    \bottomrule
\end{tabular}
\end{table*}

\begin{table*}
\caption{Results for mixing the number of used samples throughout training for the segmentation task on the DFAUST dataset. 
}
\label{tbl:mix-supp-dfaust}
\setlength{\tabcolsep}{17.5pt}
\centering
\begin{tabular}{lcrrrrrr}
    \toprule
    \multicolumn{1}{c}{Method} & \multicolumn{1}{c}{\# samp.} & \multicolumn{3}{c}{mAcc} & \multicolumn{3}{c}{mIoU} \\
    \cmidrule(l{2pt}r{2pt}){3-5}\cmidrule(l{2pt}r{2pt}){6-8}
    & train $\downarrow$/ test $\rightarrow$ & 1 & 2 & 4 & 1 & 2 & 4\\
    \midrule
    \method{MC} & mix &  93.8 & 93.7 & 93.7 & \cbg  88.7& \cbg 88.6 & \cbg 88.5 \\
    \cmidrule{2-8}
    \method{Ours} & mix &  94.0 & 94.1 & 94.1 & \cbg 89.2 & \cbg 89.3 & \cbg 89.3\\
    \midrule
    \method{STD} &  & \multicolumn{3}{c}{85.3} & \cbg & \cbg 74.5 &\cbg\\
    \bottomrule
\end{tabular}
\end{table*}

\begin{table}
\caption{Time of 1 epoch in minutes during training with different numbers of samples on the DFAUST dataset.}
\label{tbl:time-supp}
\setlength{\tabcolsep}{15pt}
\centering
\small
\begin{tabular}{rrrrr}
    \toprule
     \# samp. & mix & 1 & 2 & 4 \\
     \cmidrule(l{2pt}r{2pt}){2-5}
      time (min) & 6.1 & 3.6 & 6.1 & 15.0 \\
    \bottomrule
\end{tabular}
\end{table}

\paragraph{PCA computation.}
We also analyze the effect of the receptive field used to compute the PCA for each point on the final performance of the network using one sample to estimate the integral over SO(3).
We can see in \cref{tbl:knn_size} that using a low number of neighboring points makes the resulting frames noisy and hampers the model's performance, becoming similar to the results of \method{MC} using a random grid. 
However, with 16 neighbors, the PCA computation becomes robust, and we do not see significant improvement when we increase this receptive field.

\begin{table}
\caption{Effect of the \emph{k} chosen for the kNN operation in the PCA computation on the model's performance with one frame element on the ModelNet40 dataset.}
\label{tbl:knn_size}
\setlength{\tabcolsep}{16pt}
\centering
\small
\begin{tabular}{ccccc}
    \toprule
    4 & 8 & 16 & 32 & 64\\
    \midrule
    80.6 & 83.1 & 85.5 & 85.4 & 85.9\\
        
    \bottomrule
\end{tabular}

\end{table}

\begin{table}
    \caption{Robustness \wrt noise variations.}
    \centering
    \setlength{\tabcolsep}{19pt}
    \begin{tabular}{cccc}
       \multicolumn{4}{c}{Noise} \\
       \cmidrule(l{2pt}r{2pt}){1-4}
       train & test & mAcc & mIoU \\
       \cmidrule(l{2pt}r{2pt}){1-4}

       \multirow{3}{*}{0.005}
       & 0.005 & \cbg94.8 & 90.6  \\
       & 0.010 & \cbg93.9 & 90.0 \\
       & 0.015 & \cbg38.6 & 25.1 \\
       \cmidrule(l{2pt}r{2pt}){1-4}
       0.015 & 0.015 & \cbg93.2 & 88.2 \\ 
       \bottomrule
    \end{tabular}
    
    \label{tab:abl-robust_noise}
\end{table}
\begin{table}
    \caption{Robustness \wrt density variations.}
    \centering
    \setlength{\tabcolsep}{19pt}
    \begin{tabular}{cccc}
       \multicolumn{4}{c}{Point Density}\\
       \cmidrule(l{2pt}r{2pt}){1-4}
        train & test & mAcc & mIoU\\
       \cmidrule(l{2pt}r{2pt}){1-4}

   \multirow{3}{*}{4096} & 4096 & \cbg94.5 & 89.7 \\
        & 2048 & \cbg93.3 & 87.7 \\
        & 1024 & \cbg30.4 & 20.1 \\
       \cmidrule(l{2pt}r{2pt}){1-4}

        1024 & 1024 & \cbg92.7 & 86.8 \\ 
       \bottomrule
    \end{tabular}
    
    \label{tab:abl-robust_density}
\end{table}

\paragraph{Robustness to noise and density.} We experimented with the robustness of our model \wrt noise and density variations in the input point cloud.
Results on the DFAUST dataset using two samples are reported in \cref{tab:abl-robust_noise} and \cref{tab:abl-robust_density}, respectively.
We can see that our model is robust against increased levels of noise and reduced point density during testing.
However, if the noise increases significantly ($0.015$ std. dev.) or the number of points is reduced substantially ($1024$ points), the PCA computation is affected, and the model's performance decreases.
This can be easily solved by training the model with high noise levels or with a reduced number of points as shown in \cref{tab:abl-robust_noise} and \cref{tab:abl-robust_density}, where the model achieves similar performance to the model trained without point corruptions.

\paragraph{Effects of stochastic sampling}
One of the main contributions of our work is to sample one or two LRF stochastically during training.
How the learning is affected by this sampling boils down to how noisy the gradient estimation for the kernel parameters is.
If this gradient were computed for a single point, the gradient would be noisy.
However, this noise is significantly reduced since the gradient is computed as the expectation over multiple samples, multiple points, and multiple point clouds in the batch.
In our experiments, models trained with 1 LRF or 2 LRF during training perform only marginally worse than those trained with 4.
During testing, sampling 1 or 2 LRF can result in noisy predictions.
However, these predictions remain equivariant since the same sampling of LRF will produce the same results for random SO(3) rotations.

\section{Limitations}
\label{sec:limitations}

Although the proposed convolution operation is local equivariant via the restricted receptive field, when multiple layers are combined in a deep network, the whole model does not become local equivariant and remains global equivariant.
However, from the experiments presented in~\cref{sub-sec:segmentation}, where the network aims to perform local predictions, our model shows robustness to such scenarios, indicating that the model relies on local features to perform the predictions.

\end{document}